\newtheorem{example}{Example}
\newtheorem{counterexample}{Counterexample}
\newtheorem{proposition}{Proposition}
\newcommand{\cmark}{\ding{51}}%
\newcommand{\xmark}{\ding{55}}%
\DeclareMathOperator*{\argmax}{arg\,max}
\begin{document}

\title{Modelling Monotonic and Non-Monotonic Attribute Dependencies with Embeddings: A Theoretical Analysis}


\author{\name Steven Schockaert \email schockaerts1@cardif.ac.uk\\
\addr Cardiff University, UK
}


\maketitle

\begin{abstract}
During the last decade, entity embeddings have become ubiquitous in Artificial Intelligence. Such embeddings essentially serve as compact but semantically meaningful representations of the entities of interest. In most approaches, vectors are used for representing the entities themselves, as well as for representing their associated attributes. An important advantage of using attribute embeddings is that (some of the) semantic dependencies between the attributes can thus be captured. However, little is known about what kinds of semantic dependencies can be modelled in this way. The aim of this paper is to shed light on this question, focusing on settings where the embedding of an entity is obtained by pooling the embeddings of its known attributes.
Our particular focus is on studying the theoretical limitations of different embedding strategies, rather than their ability to effectively learn attribute dependencies in practice. We first show a number of negative results, revealing that some of the most popular embedding models are not able to capture even basic Horn rules. However, we also find that some embedding strategies are capable, in principle, of modelling both monotonic and non-monotonic attribute dependencies. 
\end{abstract}

\section{Introduction}
Vector space embeddings are currently the dominant representation framework in Natural Language Processing, Computer Vision and Machine Learning. Essentially, these embeddings represent each entity of interest as a dense vector in some fixed-dimensional space. In addition, the attributes that are used to describe these entities are typically also encoded as vectors. For example, in the case of word embeddings, the entities correspond to words and the attributes correspond to the contexts in which these words occur, e.g.\ in the form of co-occurring words \cite{Mikolov2013,Pennington2014}, syntactic dependencies \cite{levy2014dependency,vashishth2019incorporating} or even full sentences \cite{DBLP:conf/naacl/DevlinCLT19}. In inductive knowledge graph embedding, the entities from our framework correspond to the previously unseen entities for which we want to learn a representation, with the attributes representing links to known entities \cite{DBLP:conf/ijcai/HamaguchiOSM17}.
In the case of embedding-based topic models, the entities correspond to documents and the attributes correspond to the associated topics and words \cite{das2015gaussian,li2016generative,he2017efficient,xun2017correlated,dieng2020topic}. In zero-shot learning, the entities of interest are the category prototypes and the attributes correspond to semantic attributes of the categories \cite{lampert2013attribute} or associated natural language terms \cite{frome2013devise}. 
 
An important advantage of attribute embeddings is that they can implicitly capture some of the dependencies that hold between the attributes. For instance, the use of embeddings for topic modelling stems from the desire to capture topic correlations \cite{he2017efficient,xun2017correlated}. However, it is currently unclear what kinds of dependencies can be captured in this way. Before we can address this question, we first need to clarify what it means that an embedding captures some dependency. To this end, consider an embedding model in which $\sigma(\mathbf{a}\cdot\mathbf{e})$ represents the probability that entity $e$ has attribute $a$, where $\sigma$ is the sigmoid function and $\mathbf{a}$ and $\mathbf{e}$ are the embeddings of $a$ and $e$. Among many others, the popular skip-gram word embedding model is of this kind \cite{Mikolov2013}. Now consider the attributes \textit{parent}, \textit{female} and \textit{mother}, and the associated dependency that $\textit{mother}\equiv \textit{parent}\wedge \textit{female}$. What condition would need to be satisfied for the embedding to capture this dependency? One possibility is to require that $\sigma(\mathbf{mother}\cdot\mathbf{e}) = \sigma(\mathbf{parent}\cdot\mathbf{e})\cdot \sigma(\mathbf{female}\cdot\mathbf{e})$ for each vector $\mathbf{e}$ in the embedding space $\mathbb{R}^n$. However, this condition can clearly not be satisfied; e.g.\ for $\mathbf{e}=\mathbf{0}$ we obtain the condition $0.5=0.25$. Similarly, it is easy to see that the condition $\sigma(\mathbf{mother}\cdot\mathbf{e}) = \min(\sigma(\mathbf{parent}\cdot\mathbf{e}), \sigma(\mathbf{female}\cdot\mathbf{e}))$ only admits trivial solutions. Viewed from this angle, it is clear that popular embedding models are not able to capture even basic logical dependencies. For this reason, several alternative models have been proposed, in which attributes are modelled as linear subspaces \cite{garg2019quantum}, axis-aligned cones \cite{Vendrov2016,DBLP:conf/ijcai/OzcepLW20}, hyperboxes \cite{vilnis2018probabilistic} or polytopes \cite{DBLP:conf/kr/Gutierrez-Basulto18}, among others.

In this paper, we follow a different direction, analysing whether embedding models can capture logical dependencies in a less demanding sense. In particular, we consider the common setting where the embedding of an entity $e$ has to be learned from its known attributes, and the aim of the resulting embedding is to infer what other attributes $e$ is likely to satisfy. We then say that the rule $a_1\wedge ... \wedge a_n \rightarrow b$ is captured if an entity which is known to have the attributes $a_1,...,a_n$ would be represented by a vector from which the attribute $b$ would be predicted. Note that different variants of this setting can be considered, which depend on (i) how exactly entity vectors are constructed and (ii) how the resulting vectors are used for predicting the attributes of the entity. The aim of this paper is to analyse, for different variants, whether it is always possible to find an embedding which captures the rule $a_1\wedge ... \wedge a_n \rightarrow b$ iff that rule is entailed by some propositional knowledge base $K$. Note that in practice we typically do not have access to such a knowledge base $K$. However, the question of whether arbitrary propositional knowledge bases can, in principle, be modelled by a given embedding strategy is important, because if this is not the case, then it also means that some (combinations of) dependencies cannot be learned.

In addition to standard propositional entailment, we also look at non-monotonic consequence relations, which is important because attribute dependencies are often defeasible or probabilistic in nature. For instance, in a topic modelling context, we may assume that a document containing the word \emph{safari} is related to the topic \emph{nature}, and a document containing the word \emph{apple} is related to the topic \emph{food}. However, documents containing both of these words are more likely to be related to \emph{technology} instead, given that Safari is the name of Apple's internet browser. We may thus want that the rule $\textit{safari}\rightarrow \textit{topic:nature}$ is captured, while the rule $\textit{safari} \wedge \textit{apple} \rightarrow \textit{topic:nature}$ is not.

\section{Problem Setting}\label{secProblemSetting}
We assume that each entity is associated with some attributes from a given set $\mathcal{A}$. In the context of word embeddings, for instance, the set $\mathcal{A}$ could be the set of all contexts. Note that we do not consider knowledge graphs, where entities are described in terms of their relationship to other entities, rather than attributes. However, many \emph{inductive} knowledge graph embedding models can still be cast as special cases of our framework, with the attributes then representing links to entities with known (or pre-computed) embeddings.

\paragraph{Embedding and Labelling Functions} If an entity $e$ is associated with the attributes $a_1,...,a_n$, we assume that its embedding is given by $\mathbf{e} = \textit{Emb}(a_1,...,a_n)$ for some embedding function $\textit{Emb}:2^{\mathcal{A}}\rightarrow \mathbb{R}^m$. The embedding functions that we consider in this paper will rely on pooling attribute vectors. In particular, we consider embedding functions of the following form: $\textit{Emb}(a_1,...,a_n) = \phi(\mathbf{a_1},...,\mathbf{a_n})$ for some pooling function $\phi$, where $\mathbf{a}\in\mathbb{R}^m$ represents the embedding of attribute $a\in \mathcal{A}$.
For instance, we may have $\textit{Emb}(a_1,...,a_n) =\frac{1}{n}(\mathbf{a_1}+...+\mathbf{a_n})$. 
Let us furthermore assume that we have a function $\textit{Lab}: \mathbb{R}^m\rightarrow 2^{\mathcal{A}}$ that predicts attributes of entities based on their embeddings. Similar as for \textit{Emb}, we will assume that the labelling function \textit{Lab} relies on a scoring function that compares the embedding of $e$ with an embedding of the considered attribute. In particular, we consider labelling functions of the following form: $\textit{Lab}(\mathbf{e}) = \{b\in \mathcal{A}\,|\, \psi(\mathbf{e},\mathbf{\tilde{b}})\geq \lambda_b\}$. The embedding $\mathbf{\tilde{b}}$ of the attribute $b$ may in general be different from the attribute embedding $\mathbf{b}$ that is used for $\textit{Emb}$, similar to how word embedding models learn two types of embeddings for each word. The scalar $\lambda_b$ represents a threshold, which we allow to be attribute-dependent for generality. For instance, we could have $\textit{Lab}(\mathbf{e})=\{ b\in \mathcal{A} \,|\, \sigma(\mathbf{e}\cdot\mathbf{b})\geq 0.5\} = \{ b\in \mathcal{A} \,|\, \mathbf{e}\cdot\mathbf{b}\geq 0\}$. In the following, we will refer to $(\phi,\psi)$ as an \textit{embedding strategy} and to $(\textit{Emb},\textit{Lab})$ as an \textit{embedding}. Note that the embedding $(\textit{Emb},\textit{Lab})$ is determined by the embedding strategy $(\phi,\psi)$, together with the embeddings $\mathbf{a}$ and $\mathbf{\tilde{a}}$ of the attributes in $\mathcal{A}$.

\paragraph{Consequence Relations} The function $\textit{Lab}\circ\textit{Emb}$ can be viewed as a logical consequence relation. In particular, we say that an embedding $(\textit{Emb},\textit{Lab})$ captures the rule $a_1\wedge ...\wedge a_n\rightarrow b$ iff $b\in \textit{Lab}(\textit{Emb}(a_1,...,a_n))$, i.e.\ if an entity that is initially associated with the attributes $a_1,...,a_n$ would be predicted to have attribute $b$ based on its embedding. Note that under this view, the dependency $\textit{mother}\equiv \textit{parent}\wedge \textit{female}$ can be satisfied for the aforementioned choices of $\textit{Emb}$ and $\textit{Lab}$, for instance by choosing $\mathbf{parent}=(-1,1)$, $\mathbf{female}=(1,1)$ and $\mathbf{mother}=(0,1)$. However, now the issue is that a number of unwanted dependencies are also satisfied, such as $\textit{female}\rightarrow\textit{mother}$ and $\textit{parent}\rightarrow\textit{mother}$. Therefore, rather than treating rules in isolation, the question we are interested in is the following: given a propositional knowledge base $K$ and a given embedding strategy $(\phi,\psi)$, does there exist a corresponding embedding $(\textit{Emb},\textit{Lab})$ (or, equivalently, do there exist attribute embeddings) such that for all $b\in \mathcal{A}$ and $\{a_1,...,a_n\}\subseteq \mathcal{A}$ we have that $b\in \textit{Lab}(\textit{Emb}(a_1,...,a_n))$ iff $K\models a_1\wedge...\wedge a_n \rightarrow b$ holds, where $\models$ is the standard entailment relation from propositional logic. If this is the case, we say that the embedding strategy $(\phi,\psi)$ can simulate $K$. In Section \ref{secNonMon}, we will similarly look at whether non-monotonic consequence relations can be simulated with embeddings.
Throughout this paper, we will assume that the number of dimensions $m$ can be chosen arbitrarily large, as our focus is on identifying limitations that exist regardless of dimensionality. An overview of our results is shown in Table \ref{tabOverview}.




\begin{table}[t]
\centering
\begin{tabular}{llcc}
\toprule
$\textit{Emb}(a_1,...,a_n)$ & $\textit{Lab}(\mathbf{e})$ & \textbf{Monotonic} & \textbf{Non-mon.}\\
\midrule
$\frac{1}{n}\sum_i \mathbf{a_i}$ & $\{b \,|\, \mathbf{e}\cdot \mathbf{\tilde{b}}\geq\lambda_b\}$ & \xmark & \xmark\\
$\frac{1}{n}\sum_i \mathbf{a_i}$ & $\{b \,|\, d(\mathbf{e}, \mathbf{\tilde{b}}) \leq \theta_b\}$ & \xmark & \xmark\\
$\frac{\sum_i \mathbf{a_i}}{\|\sum_i \mathbf{a_i}\|}$ & $\{b \,|\, \mathbf{e}\cdot \mathbf{\tilde{b}}\geq\lambda_b\}$ & \xmark & \xmark\\
$\frac{\sum_i \mathbf{a_i}}{\|\sum_i \mathbf{a_i}\|}$ & $\{b \,|\, d(\mathbf{e},\mathbf{\tilde{b}}) \leq \theta_b\}$ & \xmark & \xmark\\
%
$\argmax_{\mathbf{e}}\sum_i\log\sigma(\mathbf{e}\cdot \mathbf{a_i}) + \kappa\|\mathbf{e}\|^2$  & $\{b \,|\, \mathbf{e}\cdot \mathbf{\tilde{b}}\geq \lambda_b\}$ & \xmark & \xmark\\
$\argmax_{\mathbf{e}}\sum_i\log\sigma(\mathbf{e}\cdot \mathbf{a_i}) + \kappa\|\mathbf{e}\|^2$  & $\{b \,|\, d(\mathbf{e},\mathbf{\tilde{b}})\leq \theta_b\}$ & \xmark & \xmark\\
%
\midrule
$\frac{1}{n}\sum_i \mathbf{a_i}$ & $\{b \,|\, \textsc{ReLU}(\mathbf{e})\cdot \mathbf{b}\geq 0\}$ & \cmark & \cmark\\
$\mathbf{a_1}\odot ... \odot \mathbf{a_n}$ & $\{b \,|\, \mathbf{e} \cdot \mathbf{\tilde{b}}\geq 0\}$ & \cmark & \cmark\\
$\mathbf{a_1}\odot ... \odot \mathbf{a_n}$ & $\{b \,|\, \mathbf{e} \cdot \mathbf{b}\geq 0\}$ & \xmark & \xmark\\
$\max(\mathbf{a_1},...,\mathbf{a_n})$ & $\{b \,|\, \mathbf{b} \preceq \mathbf{e}\}$ & \cmark & \xmark\\
\bottomrule
\end{tabular}
\caption{Overview showing which type of embeddings are able to model monotonic and non-monotonic dependencies.\label{tabOverview}}
\end{table}
\section{Monotonic Reasoning}\label{secMonotonic}
In Sections \ref{secAveragingEmbeddings}--\ref{secSigmoid}, we first discuss embedding strategies which are not capable of capturing certain kinds of propositional knowledge bases. Crucially, these strategies cover many of the most popular embedding models. Section \ref{secMonPos} then discusses embedding strategies which are capable of modelling arbitrary propositional knowledge bases.

\subsection{Averaging Based Embeddings}\label{secAveragingEmbeddings}
One of the most natural choices for the embedding function $\textit{Emb}$ consists in averaging the attribute embeddings, i.e.:
\begin{align}
\textit{Emb}_{\textit{avg}}(a_1,...,a_n)&=\frac{1}{n}(\mathbf{a_1}+...+\mathbf{a_n})
\end{align}
This choice corresponds, among many others, to the common strategy of learning sentence or document vectors by averaging word vectors. It is also closely related to the CBOW model from \citet{Mikolov2013}. As the labelling function, we first consider the common choice to model the probability that entity $e$ has attribute $a$ as $\sigma(\mathbf{e}\cdot \mathbf{a})$. However, since each condition of the form $\sigma(\mathbf{e}\cdot \mathbf{a})\geq \delta$, with $0<\delta<1$, is equivalent to the condition $\mathbf{e}\cdot \mathbf{a}\geq \sigma^{-1}(\delta)$, we use a simple dot product in the formulation:
\begin{align}
\textit{Lab}_{\textit{dot}}(\mathbf{e})&=\{b \in \mathcal{A} \,|\, \mathbf{e}\cdot \mathbf{\tilde{b}}\geq\lambda_b\}
\end{align}
where $\lambda_b\in\mathbb{R}$ is a threshold that may in general be attribute-specific. We will use $\phi_{\textit{avg}}$ and $\psi_{\textit{dot}}$ to denote the pooling functions associated with $\textit{Emb}_{\textit{avg}}$ and $\textit{Lab}_{\textit{dot}}$. For the ease of presentation, throughout the paper, we will similarly use subscripts to link embedding and labelling functions to their corresponding pooling functions, rather than each time introducing these notations explicitly. Note that the embedding strategy $(\phi_{\textit{avg}},\psi_{\textit{dot}})$ also covers models where a linear transformation is applied to the average of the attribute embeddings, as is the case for the \emph{\`a la carte} method from \citet{khodak2018carte}. 

As the following counterexample shows, not all propositional knowledge bases can be simulated with embeddings of the form $(\textit{Emb}_{\textit{avg}},\textit{Lab}_{\textit{dot}})$.
\begin{counterexample}\label{counterexampleAvgDot}
Let $K=\{a\wedge b\rightarrow x, c\wedge d \rightarrow x\}$. We show that $K$ cannot be simulated by $(\phi_{\textit{avg}},\psi_{\textit{dot}})$.
Indeed, if a suitable embedding existed, among others the following inequalities would have to be satisfied:
\begin{align*}
\frac{\mathbf{a}+\mathbf{b}}{2}\cdot \mathbf{\tilde{x}} &\geq\lambda_x & \frac{\mathbf{c}+\mathbf{d}}{2}\cdot \mathbf{\tilde{x}} &\geq \lambda_x &
\frac{\mathbf{a}+\mathbf{c}}{2}\cdot \mathbf{\tilde{x}} &< \lambda_x & \frac{\mathbf{b}+\mathbf{d}}{2}\cdot \mathbf{\tilde{x}} &< \lambda_x
\end{align*}
This is not possible, since the first two inequalities imply $(\mathbf{a}+\mathbf{b}+\mathbf{c}+\mathbf{d})\cdot \mathbf{\tilde{x}} \geq 4\lambda$ whereas the last two imply $(\mathbf{a}+\mathbf{b}+\mathbf{c}+\mathbf{d})\cdot \mathbf{\tilde{x}} < 4\lambda$.
\end{counterexample}
\noindent Another natural choice for the labelling function is to rely on Euclidean distance:
\begin{align}
\textit{Lab}_{\textit{dist}}(\mathbf{e})&=\{b\in \mathcal{A} \,|\, d(\mathbf{e}, \mathbf{\tilde{b}})\leq \theta_b\}
\end{align}
where $\theta_b\geq 0$. It is easy to verify that the knowledge base from Counterexample \ref{counterexampleAvgDot} can be simulated by choosing $\mathbf{a}=(-1,0)$, $\mathbf{b}=(1,0)$, $\mathbf{c}=(0,-1)$, $\mathbf{d}=(0,1)$ and $\mathbf{\tilde{x}}=(0,0)$. However, as the following counterexample shows, not all knowledge bases can be simulated using $\textit{Emb}_{\textit{avg}}$ and $\textit{Lab}_{\textit{dist}}$ either.
\begin{counterexample}
Let $K=\{a\wedge b \rightarrow x, c\wedge d \rightarrow x, a\wedge c \rightarrow y, b\wedge d \rightarrow y\}$. We show that $K$ cannot be simulated by $(\phi_{\textit{avg}},\psi_{\textit{dist}})$.
Indeed, suppose that a suitable embedding $(\textit{Emb}_{\textit{avg}},\textit{Lab}_{\textit{dist}})$ existed.
From the rules with $x$ in the head, it follows that
$d^2\left(\frac{\mathbf{a}+\mathbf{b}}{2},\mathbf{\tilde{x}}\right) + d^2\left(\frac{\mathbf{c}+\mathbf{d}}{2},\mathbf{\tilde{x}}\right) <
d^2\left(\frac{\mathbf{a}+\mathbf{c}}{2},\mathbf{\tilde{x}}\right) +
d^2\left(\frac{\mathbf{b}+\mathbf{d}}{2},\mathbf{\tilde{x}}\right)$. This is equivalent with:
\begin{align*}
&\|\mathbf{a}\|^2 + \|\mathbf{b}\|^2 + \|\mathbf{c}\|^2 + \|\mathbf{d}\|^2 + 8 \|\mathbf{x}\|^2 + 2(\mathbf{a}\cdot \mathbf{b} + \mathbf{c}\cdot \mathbf{d}) - 4(\mathbf{a}+\mathbf{b}+\mathbf{c}+\mathbf{d})\cdot \mathbf{\tilde{x}} \\
&< \|\mathbf{a}\|^2 + \|\mathbf{c}\|^2 + \|\mathbf{b}\|^2 + \|\mathbf{d}\|^2 + 8 \|\mathbf{x}\|^2 + 2(\mathbf{a}\cdot \mathbf{c} + \mathbf{b}\cdot \mathbf{d})  - 4(\mathbf{a}+\mathbf{b}+\mathbf{c}+\mathbf{d})\cdot \mathbf{\tilde{x}}
\end{align*}
which simplifies to $\mathbf{a}\cdot \mathbf{b} + \mathbf{c}\cdot \mathbf{d} < \mathbf{a}\cdot \mathbf{c} +  \mathbf{b}\cdot \mathbf{d}$. 
In the same way, using the rules with $y$ in the head, we find $\mathbf{a}\cdot \mathbf{b} + \mathbf{c}\cdot \mathbf{d} > \mathbf{a}\cdot \mathbf{c} +  \mathbf{b}\cdot \mathbf{d}$, which is a contradiction.
\end{counterexample}
\noindent To illustrate the relevance of these results, let the attributes $a_1,...,a_n$ correspond to the words that are observed in a document $d$, and suppose that other attributes, which are not observed, correspond to document categories. We may want the embedding model to capture rules such as $\textit{tennis} \wedge \textit{player} \wedge \textit{won} \rightarrow \textit{cat:sports}$, meaning that if the words \emph{tennis}, \emph{player} and \emph{won} appear in a document, then it should belong to the category \emph{sports}. Our results show that, in general, such dependencies cannot be captured when $\textit{Emb}_{\textit{avg}}$ is used in combination with $\textit{Lab}_{\textit{dot}}$ or $\textit{Lab}_{\textit{dist}}$. For instance, this means that there are theoretical limitations to the kinds of categories that may be predicted from document embeddings when using the \emph{\`a la carte} method from \cite{khodak2018carte} together with a linear classifier.
\subsection{Embeddings as Normalised Averages}
\noindent Let us now consider the following embedding function, which represents entities using normalised vectors:
\begin{align}
\textit{Emb}_{\textit{norm}}(a_1,...,a_n)&=\frac{\mathbf{a_1}+...+\mathbf{a_n}}{\|\mathbf{a_1}+...+\mathbf{a_n}\|}
\end{align}
provided $\|\mathbf{a_1}+...+\mathbf{a_n}\|>0$, and $\textit{Emb}_{\textit{norm}}(a_1,...,a_n)=\mathbf{0}$ otherwise.
Entity vectors can then be understood as maximum likelihood estimates, if we view attributes as von Mises-Fisher distributions, which is a common choice when modelling text \cite{DBLP:journals/jmlr/BanerjeeDGS05,DBLP:conf/acl/BatmanghelichSN16,DBLP:conf/nips/MengHWZZK019}. In particular, if $p(\mathbf{e}|a)$ is a von Mises-Fisher distribution with mean $\frac{\mathbf{a}}{\|\mathbf{a}\|}$ and concentration parameter $\|\mathbf{a}\|$, for each $a\in\mathcal{A}$, then we have:
\begin{align*}
\textit{Emb}_{\textit{norm}}(a_1,...,a_n) &= \argmax_{\mathbf{e}}\prod_{i=1}^n p(\mathbf{e}|a_i)\quad \text{s.t.\ $\|\mathbf{e}\|=1$}\\
&=\argmax_{\mathbf{e}}\sum_{i=1}^n \mathbf{e}\cdot\mathbf{a_i}\quad \text{s.t.\ $\|\mathbf{e}\|=1$}
\end{align*}
The question of whether propositional knowledge bases can be simulated with $\textit{Emb}_{\textit{norm}}$ is thus relevant for understanding the limitations of von Mises-Fisher based topic models and document representations \cite{DBLP:conf/nips/MengHWZZK019,DBLP:conf/acl/BatmanghelichSN16}. The following counterexample shows that not all knowledge bases can be simulated with $\textit{Emb}_{\textit{norm}}$. While the use of normalised averages is intuitively similar to the averages from Section \ref{secAveragingEmbeddings}, the counterexample in this case is more involved.

\begin{counterexample}\label{counterexampleNormalisedAvg}
Let $K= \{a \wedge b \rightarrow x, c \wedge d \rightarrow x, a \wedge c \rightarrow y, b \wedge d \rightarrow y, a \wedge d \rightarrow y, b \wedge c \rightarrow y\}$.  We show that $K$ cannot be simulated by $(\phi_{\textit{norm}},\psi_{\textit{dot}})$.
Indeed, suppose that a suitable embedding $(\textit{Emb}_{\textit{norm}},\textit{Lab}_{\textit{dot}})$ existed. Then we must have $\lambda_x>0$ and $\lambda_y>0$, which follows immediately from the fact that $\textit{Emb}_{\textit{norm}}(\emptyset)=\mathbf{0}$ while $K\not \models \top\rightarrow x$ and $K\not \models \top\rightarrow y$.
For the ease of presentation, let us introduce the following abbreviations:
$
\mathbf{y}_{ab}=\frac{\mathbf{a}+\mathbf{b}}{\|\mathbf{a}+\mathbf{b}\|}
$
and similar for $\mathbf{y}_{ac}, \mathbf{y}_{ad}, \mathbf{y}_{bc}, \mathbf{y}_{bd}, \mathbf{y}_{cd}$. 
Consider the hyperplane $H$ defined by $H= \{\mathbf{e} \,|\,\mathbf{e}\cdot(\lambda_x\mathbf{\tilde{y}}-\lambda_y\mathbf{\tilde{x}}) =0\}$. If $\mathbf{e}$ belongs to the positive half-space $H^+=\{\mathbf{e} \,|\,\mathbf{e}\cdot(\lambda_x\mathbf{\tilde{y}}-\lambda_y\mathbf{\tilde{x}})\geq 0\}$, we have:
\begin{align*}
\mathbf{e}\cdot\mathbf{\tilde{y}} \geq \frac{\lambda_y}{\lambda_x}\mathbf{e}\cdot\mathbf{\tilde{x}}
\end{align*}
This implies that either $\mathbf{e}\cdot\mathbf{\tilde{x}}<\lambda_x$ or $\mathbf{e}\cdot\mathbf{\tilde{y}}\geq\lambda_y$. We thus find in particular that $\mathbf{y}_{ab}$ and $\mathbf{y}_{cd}$ do not belong to this positive half-space $H^+$. In the same way, we find that $\mathbf{y}_{ac}$, $\mathbf{y}_{bd}$, $\mathbf{y}_{ad}$  and $\mathbf{y}_{bc}$ do not belong to the negative half-space $H^-=\{\mathbf{e} \,|\,\mathbf{e}\cdot(\lambda_x\mathbf{\tilde{y}}-\lambda_y\mathbf{\tilde{x}})\leq 0\}$. 
Note that when $\mathbf{e}_1,\mathbf{e}_2\in H^+$, we also have $\mathbf{e}_1+\mathbf{e}_2\in H^+$ and $\frac{\mathbf{e}_1+\mathbf{e}_2}{\|\mathbf{e}_1+\mathbf{e}_2\|}\in H^+$, and similar for $H^-$. Since $\mathbf{y}_{ab},\mathbf{y}_{cd}\in H^-$, at least one of $\mathbf{a},\mathbf{b}$ must thus belong to $H^-$ and at least one of $\mathbf{c},\mathbf{d}$ must belong to $H^-$. Assume for instance that $\mathbf{a}\in H^-$ and $\mathbf{c}\in H^-$; the other cases follow by symmetry. From $\mathbf{a}\in H^-$ and $\mathbf{c}\in H^-$, we find that $\mathbf{y}_{ac}\in H^-$, which is a contradiction. 
\end{counterexample}
\noindent While we used $\textit{Lab}_{\textit{dot}}$ in the above counterexample, the result also holds for $\textit{Lab}_{\textit{dist}}$. Indeed, since $\|\mathbf{e}\|=1$, we have $d(\mathbf{e},\mathbf{\tilde{b}})\leq \theta_b$ iff $d^2(\mathbf{e},\mathbf{\tilde{b}})\leq \theta_b^2$ iff $\mathbf{e}\cdot \mathbf{\tilde{b}} \geq \frac{1}{2}(1+\|\mathbf{\tilde{b}}\|^2 - \theta_b^2)$. We thus have that $K$ can be simulated by $(\phi_{\textit{norm}},\psi_{\textit{dot}})$ iff it can be simulated by $(\phi_{\textit{norm}},\psi_{\textit{dist}})$.

\subsection{Sigmoid Based Embeddings}\label{secSigmoid}
We now turn to the common choice of modelling attribute probabilities using the sigmoid function, i.e.\ let us assume that $\sigma(\mathbf{e}\cdot\mathbf{a})$ represents the probability that entity $e$ has attribute $a$. A natural choice for inferring the embedding of $e$ is then to maximise the likelihood of the observed attributes $a_1,...,a_n$:
\begin{align}\label{eqDefSigmoidEmbedding}
\textit{Emb}_{\textit{sig}}(a_1,...,a_n)&=\argmax_{\mathbf{e}}\sum_{i=1}^n \log\sigma(\mathbf{e}\cdot \mathbf{a_i}) + \kappa\|\mathbf{e}\|^2
\end{align}
where $\kappa>0$ is a constant.
This embedding strategy closely corresponds to the skip-gram model \cite{Mikolov2013}, with two differences. First, the standard skip-gram model does not include the regularisation term $\kappa\|\mathbf{e}\|^2$. Here, we need to add this term, which amounts to imposing a Gaussian prior, to ensure that $\textit{Emb}_{\textit{sig}}$ is well-defined\footnote{In particular, this ensures that the maximum is attained for a vector with finite coordinates. This vector may not be unique, however, in which case we assume that an arbitrary maximising vector $\mathbf{e}$ is chosen.}. Second, the skip-gram model also includes negative samples. However, since $1-\sigma(\mathbf{e}\cdot\mathbf{a}) = \sigma(-\mathbf{e}\cdot\mathbf{a})$, negative samples can be considered as a special case where some of the attributes $a_i$ capture the fact that another attribute $b_i$ is not present, by constraining the embedding such that $\mathbf{a_i}=-\mathbf{b_i}$. The limitations of the embedding function $\textit{Emb}_{\textit{sig}}$ thus still apply to settings where negative samples are used. 
The following counterexample shows that arbitrary propositional knowledge bases cannot be modelled with $\textit{Emb}_{\textit{sig}}$ and $\textit{Lab}_{\textit{dot}}$.

\begin{counterexample}\label{counterexampleSigmoid1}
The main idea is to follow the same strategy as in Counterexample \ref{counterexampleNormalisedAvg}, which is possible thanks to the fact that $\textit{Emb}_{\textit{sig}}(a,b)$ is a conical combination of $\textit{Emb}_{\textit{sig}}(a)$ and $\textit{Emb}_{\textit{sig}}(b)$, provided $\cos(\mathbf{a},\mathbf{b})>-1$. Some care is needed to ensure that the latter condition is satisfied for various pairs of attributes. 

Let $K= \{a \wedge b \rightarrow x, c \wedge d \rightarrow x, a \wedge c \rightarrow y, b \wedge d \rightarrow y, a \wedge d \rightarrow y, b \wedge c \rightarrow y, a\rightarrow z_{ab}, b\rightarrow z_{ab}, a\rightarrow z_{ac}, c\rightarrow z_{ac}, a\rightarrow z_{ad}, d\rightarrow z_{ad}, b\rightarrow z_{bc}, c\rightarrow z_{bc},b\rightarrow z_{bd}, d\rightarrow z_{bd}, c\rightarrow z_{cd}, d\rightarrow z_{cd}\}$.
 We show that $K$ cannot be simulated by $(\phi_{\textit{sig}},\psi_{\textit{dot}})$.
Suppose that a suitable embedding $(\textit{Emb}_{\textit{sig}},\textit{Lab}_{\textit{dot}})$ existed.
Note that $\mathbf{y_a}=\alpha_a \mathbf{a}$ and $\mathbf{y_b}=\alpha_b \mathbf{b}$ for some $\alpha_a,\alpha_b>0$.  Since we have $\lambda_{z_{ab}}>0$ (which follows in the same way as $\lambda_x>0$), it must be the case that $\cos(\mathbf{y_a},\mathbf{\tilde{z}_{ab}})>0$ and $\cos(\mathbf{y_b},\mathbf{\tilde{z}_{ab}})>0$, which means $\cos(\mathbf{a},\mathbf{\tilde{z}_{ab}})>0$ and $\cos(\mathbf{b},\mathbf{\tilde{z}_{ab}})>0$, which implies $\cos(\mathbf{a},\mathbf{b})>-1$. This, in turn, implies that $\mathbf{y_{ab}}$ is a conical combination of $\mathbf{y_a}$ and $\mathbf{y_b}$. We similarly have that $\mathbf{y_{ac}}$, $\mathbf{y_{ad}}$, $\mathbf{y_{bc}}$, $\mathbf{y_{bd}}$ and $\mathbf{y_{cd}}$ are conical combinations of the corresponding attribute vectors.
Consider again the hyperplane $H$ defined by $H= \{\mathbf{e} \,|\,\mathbf{e}\cdot(\lambda_x\mathbf{\tilde{y}}-\lambda_y\mathbf{\tilde{x}}) =0\}$ from Counterexample \ref{counterexampleNormalisedAvg}, and the positive and negative half-spaces $H^+$ and $H^-$. When $\mathbf{e}_1,\mathbf{e}_2\in H^+$, we also have $\mu_1\mathbf{e}_1+\mu_2\mathbf{e}_2\in H^+$ for $\mu_1,\mu_2\geq 0$, i.e.\ if $\mathbf{e}_1$ and $\mathbf{e}_2$ are in $H^+$ then the same is true for any conical combination of $e_1$ and $e_2$, and similar for $H^-$. We thus obtain a contradiction in the same way as in Counterexample \ref{counterexampleNormalisedAvg}.
\end{counterexample}

\noindent In the appendix, we provide a similar counterexample for the strategy $(\phi_{\textit{sig}},\psi_{\textit{dist}})$. Note that Counterexample \ref{counterexampleSigmoid1} only relies on the fact that $\textit{Emb}_{\textit{sig}}(a_1,...,a_n)$ is a conical combination of $\mathbf{a_1},...,\mathbf{a_n}$. The same counterexample can thus be used for other embeddings strategies that rely on a weighted average of attribute vectors with non-negative weights.

\subsection{Modelling Monotonic Dependencies with Embeddings}\label{secMonPos}
Thus far, we have found that standard embedding strategies are not capable of simulating even basic sets of Horn rules. It turns out, however, that this limitation can be solved by adding a non-linearity to the labelling function:
\begin{align}
\textit{Lab}_{\textit{relu}}(\mathbf{e})&=\{b\in \mathcal{A} \,|\, \textsc{ReLU}(\mathbf{e})\cdot \mathbf{b}\geq 0\}
\end{align}
where the \textsc{ReLU} function is applied component-wise. Note that in the definition of $\textit{Lab}_{\textit{relu}}$ we fixed the threshold at 0 and we fixed $\mathbf{a}=\mathbf{\tilde{a}}$ for all $a\in\mathcal{A}$, to highlight the fact that this restricted definition is already sufficient for modelling propositional dependencies.

\begin{proposition}\label{propDependencyMonotonicRelu}
For any propositional knowledge base $K$ over $\mathcal{A}$, there exist embeddings of the attributes in $\mathcal{A}$ such that $b\in \textit{Lab}_{\textit{relu}}(\textit{Emb}_{\textit{avg}}(a_1,...,a_n))$ iff $K\models a_1\wedge ... \wedge a_n \rightarrow b$.
\end{proposition}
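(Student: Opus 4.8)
The plan is to build the attribute embeddings directly out of the propositional models of $K$. Let $\Omega$ be the set of truth assignments $\omega:\mathcal{A}\to\{0,1\}$ that satisfy $K$. If $\Omega=\emptyset$, then $K$ entails every rule, and the claim is trivial (take a $1$-dimensional space and set $\mathbf{a}=0$ for every $a$, so that every dot product is $0\geq 0$); so assume $\Omega\neq\emptyset$ and work in $\mathbb{R}^{\Omega}$, one coordinate per model. Fix a number $M$ with $M\geq|\mathcal{A}|$ and $M\geq|\Omega|$, and for each $a\in\mathcal{A}$ define $\mathbf{a}\in\mathbb{R}^{\Omega}$ by $(\mathbf{a})_\omega=1$ if $\omega\models a$ and $(\mathbf{a})_\omega=-M$ otherwise. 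Since $\textit{Lab}_{\textit{relu}}$ forces $\mathbf{\tilde a}=\mathbf{a}$ and fixes the threshold at $0$, this single family of vectors must serve both in the ``body'' role (inside $\textit{Emb}_{\textit{avg}}$) and in the ``head'' role (dotted against the \textsc{ReLU} output); the choice of $M$ is precisely what reconciles these two roles.

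First I would analyse the body. For $S=\{a_1,\dots,a_n\}$ and $\omega\in\Omega$, the $\omega$-th coordinate of $\mathbf{a_1}+\dots+\mathbf{a_n}$ equals $n$ if $\omega\models a_i$ for all $i$, and otherwise equals $n-k(M+1)$, where $k\geq 1$ is the number of $a_i$ with $\omega\not\models a_i$; since $M\geq|\mathcal{A}|\geq n$, this latter value is strictly negative. Using $\textsc{ReLU}(\tfrac1n\mathbf{v})=\tfrac1n\textsc{ReLU}(\mathbf{v})$ for $n>0$, it follows that $\textsc{ReLU}(\textit{Emb}_{\textit{avg}}(a_1,\dots,a_n))$ has $\omega$-th coordinate equal to $1$ when $\omega\models a_1\wedge\dots\wedge a_n$ and $0$ otherwise.

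Next I would score this against a head attribute $b$. By the previous step,
\[
\textsc{ReLU}(\textit{Emb}_{\textit{avg}}(a_1,\dots,a_n))\cdot\mathbf{b}=\sum_{\omega\in\Omega:\ \omega\models a_1\wedge\dots\wedge a_n}(\mathbf{b})_\omega .
\]
Split the sum according to whether $\omega\models b$: terms with $\omega\models b$ contribute $+1$, terms with $\omega\not\models b$ contribute $-M$. If $K\models a_1\wedge\dots\wedge a_n\rightarrow b$, every $\omega$ in the index set satisfies $b$, so the sum is $\geq 0$. If $K\not\models a_1\wedge\dots\wedge a_n\rightarrow b$, some $\omega$ in the index set has $\omega\not\models b$, contributing $-M$, while the remaining at most $|\Omega|-1$ terms contribute at most $+1$ each; since $M\geq|\Omega|$, the sum is strictly negative. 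Hence $b\in\textit{Lab}_{\textit{relu}}(\textit{Emb}_{\textit{avg}}(a_1,\dots,a_n))$ iff $K\models a_1\wedge\dots\wedge a_n\rightarrow b$.

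The only genuine obstacle is the one flagged above: the same vectors must simultaneously realise a ``conjunction over the body'' (which wants a large negative penalty so that one unsatisfied body atom zeroes the coordinate) and an ``entailment check over the head'' (which wants a large negative penalty so that one counter-model makes the dot product negative); picking a single penalty $M$ that dominates in both computations — $M\geq|\mathcal{A}|$ for the body, $M\geq|\Omega|$ for the head — is what makes everything fit, and the rest is bookkeeping. One minor point deserves a remark: either assume $n\geq 1$ or adopt a convention for $\textit{Emb}_{\textit{avg}}(\emptyset)$, since the empty body corresponds to the rule $\top\rightarrow b$.
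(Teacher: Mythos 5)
Your proof is correct and follows essentially the same construction as the paper's: one coordinate per model of $K$, with value $1$ on satisfying models and a large negative penalty otherwise, so that \textsc{ReLU} turns the averaged body into an indicator of the models of $K\cup\{a_1,\dots,a_n\}$ and the dot product with $\mathbf{b}$ detects counter-models. The only differences are cosmetic (you drop the paper's extra always-$1$ coordinate, replace $\delta>2^{|\mathcal{A}|}$ by the equivalent $M\geq\max(|\mathcal{A}|,|\Omega|)$, and handle inconsistent $K$ explicitly), none of which affects correctness.
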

\begin{proof}
Let $\textit{mod}(K) =\{\omega_1,...,\omega_l\}$ be the set of models of $K$. 
We define the embedding $\mathbf{a}=(x^a_1,...,x^a_{l+1})$ of the attribute $a$ as follows:
\begin{align*}
x^a_i &= 
\begin{cases}
1 & \text{if $i\leq l$ and $\omega_i\models a$}\\
-\delta & \text{if $i\leq l$ and $\omega_i\not\models a$}\\
1 & \text{if $i=l+1$}
\end{cases}
\end{align*}
where $\delta$ is a constant which is chosen such that $\delta > 2^{|\mathcal{A}|}$.
Let $\mathbf{e} = (y_1,...,y_{l+1}) = \textit{Emb}_{\textit{avg}}(a_1,...,a_n)$. Let $n_i$ be the number of atoms from $\{a_1,...,a_n\}$ that are satisfied in $\omega_i$. Then we have $y_{l+1}=1$ and for $i\leq l$ we have
\begin{align*}
y_i = \frac{1}{n} \left(n_i - (n-n_i)\delta\right)
\end{align*}
In particular, we have $y_i=1$ if $\omega_i\models \{a_1,...,a_n\}$  and $y_i<0$ otherwise (since $\delta > |\mathcal{A}|\geq n_i)$. We thus have:
\begin{align*}
\textsc{ReLU}(y_i) &=
\begin{cases}
1 & \text{if $\omega_i \models \{a_1,...,a_n\}$}\\
0 & \text{otherwise}
\end{cases}
\end{align*}
For $b\in \mathcal{A}$, we find
\begin{align*}
\textsc{ReLU}(\mathbf{e})\cdot \mathbf{b}
= &1 + |\{\omega \,|\, \omega\models K \cup\{a_1,...,a_n, b\}|- \delta|\{\omega \,|\, \omega\models K \cup\{a_1,...,a_n, \neg b\}|
\end{align*}
In particular, we have $\textsc{ReLU}(\mathbf{e})\cdot \mathbf{b}\geq 0$ iff $|\{\omega \,|\, \omega\models K \cup\{a_1,...,a_n, \neg b\}|=0$, which is equivalent to $K\models a_1\wedge...\wedge a_n\rightarrow b$.
\end{proof}
\noindent Note that the proof can be straightforwardly adapted to other types of non-linearities. For instance, with sigmoid instead of ReLU, by choosing $\delta$ sufficiently large, we can ensure that $\sigma(y_i)$ is arbitrarily close to 0 when $\omega_i\not\models \{a_1,...,a_n\}$, and rely on the same argument.

Next we consider the following embedding function:
\begin{align}
\textit{Emb}_{\textit{had}}(a_1,...,a_n)&=\mathbf{a_1}\odot...\odot \mathbf{a_n}
\end{align}
where we write $\odot$ for the Hadamard product (i.e.\ the component-wise product of vectors). In the appendix, we show the following result, using a construction that is very similar to the one from the proof of Proposition \ref{propDependencyMonotonicRelu}.
\begin{proposition}\label{propMonotonicHad}
For any propositional knowledge base $K$ over $\mathcal{A}$, there exist embeddings of the attributes in $\mathcal{A}$ such that $b\in \textit{Lab}_{\textit{dot}}(\textit{Emb}_{\textit{had}}(a_1,...,a_n))$ iff $K\models a_1\wedge ... \wedge a_n \rightarrow b$.
\end{proposition}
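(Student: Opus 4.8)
The plan is to re-use the construction from the proof of Proposition~\ref{propDependencyMonotonicRelu}, letting the Hadamard product take over the role that the $\textsc{ReLU}$ non-linearity played there. Let $\textit{mod}(K)=\{\omega_1,\dots,\omega_l\}$ be the models of $K$. I would embed each attribute $a\in\mathcal{A}$ as the $0/1$ indicator vector $\mathbf{a}=(x^a_1,\dots,x^a_l)\in\mathbb{R}^l$ defined by $x^a_i=1$ if $\omega_i\models a$ and $x^a_i=0$ otherwise. The crucial observation is that, with this choice, the $i$-th coordinate of $\textit{Emb}_{\textit{had}}(a_1,\dots,a_n)=\mathbf{a_1}\odot\dots\odot\mathbf{a_n}$ equals $1$ exactly when $\omega_i\models\{a_1,\dots,a_n\}$ and equals $0$ otherwise. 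In other words, $\textit{Emb}_{\textit{had}}(a_1,\dots,a_n)$ reproduces precisely the vector $(\textsc{ReLU}(y_1),\dots,\textsc{ReLU}(y_l))$ appearing in the proof of Proposition~\ref{propDependencyMonotonicRelu}; the multiplicative semantics of $\odot$ performs the thresholding for free, so no explicit non-linearity in the labelling function is needed.

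For the labelling side, I would exploit the freedom allowed by $\textit{Lab}_{\textit{dot}}$ to use a separate embedding $\mathbf{\tilde{b}}$ for an attribute $b$ when it occurs in a rule head. Concretely, set $\mathbf{\tilde{b}}=(z^b_1,\dots,z^b_l)$ with $z^b_i=1$ if $\omega_i\models b$ and $z^b_i=-\delta$ otherwise, where $\delta$ is a constant with $\delta>2^{|\mathcal{A}|}$, and fix $\lambda_b=0$. Writing $\mathbf{e}=\textit{Emb}_{\textit{had}}(a_1,\dots,a_n)$ and summing only over the coordinates on which $\mathbf{e}$ is nonzero gives
\[
\mathbf{e}\cdot\mathbf{\tilde{b}} \;=\; |\{\omega\mid\omega\models K\cup\{a_1,\dots,a_n,b\}\}|\;-\;\delta\,|\{\omega\mid\omega\models K\cup\{a_1,\dots,a_n,\neg b\}\}| .
\]
Since $l\leq 2^{|\mathcal{A}|}<\delta$, the first term is strictly smaller than $\delta$, so $\mathbf{e}\cdot\mathbf{\tilde{b}}\geq 0$ holds iff $|\{\omega\mid\omega\models K\cup\{a_1,\dots,a_n,\neg b\}\}|=0$, which is exactly the condition $K\models a_1\wedge\dots\wedge a_n\rightarrow b$. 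Hence $b\in\textit{Lab}_{\textit{dot}}(\textit{Emb}_{\textit{had}}(a_1,\dots,a_n))$ iff $K\models a_1\wedge\dots\wedge a_n\rightarrow b$, which is the claim.

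The only case that needs separate attention is the degenerate one in which $\{a_1,\dots,a_n\}$ is inconsistent with $K$: then every coordinate of $\mathbf{e}$ is $0$, so $\mathbf{e}\cdot\mathbf{\tilde{b}}=0\geq 0$ for every $b$, which is again correct because $K\models a_1\wedge\dots\wedge a_n\rightarrow b$ holds vacuously in that situation. I do not expect a real obstacle: the argument is essentially a re-encoding of Proposition~\ref{propDependencyMonotonicRelu}. The one conceptual point worth stressing is that the construction relies essentially on taking $\mathbf{\tilde{b}}\neq\mathbf{b}$ --- the $0/1$ indicator vectors cannot be reused on both sides of the dot product, since $\mathbf{e}\cdot\mathbf{b}$ would always be nonnegative --- which is exactly why the variant with $\textit{Lab}(\mathbf{e})=\{b\mid\mathbf{e}\cdot\mathbf{b}\geq 0\}$ receives a negative entry in Table~\ref{tabOverview}.
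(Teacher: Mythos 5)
Your proof is correct and follows essentially the same construction as the paper's: $0/1$ indicator vectors over the models of $K$ for the $\mathbf{a}$'s, and head embeddings $\mathbf{\tilde{b}}$ with entries $1$ or $-\delta$, so that the dot product is nonnegative exactly when no model of $K\cup\{a_1,\dots,a_n\}$ falsifies $b$. The only (cosmetic) difference is that the paper appends an extra coordinate that is always $1$, which avoids a zero-dimensional embedding space in the degenerate case $\textit{mod}(K)=\emptyset$; your argument otherwise matches, including the observation that $\mathbf{\tilde{b}}\neq\mathbf{b}$ is essential.
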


\noindent Note that in contrast to the setting from Proposition \ref{propDependencyMonotonicRelu}, here we allow $\mathbf{a}\neq \mathbf{\tilde{a}}$. It is easy to see that this additional freedom is necessary, since otherwise any embedding modelling a rule of the form $a\rightarrow b$ would also model the reversed rule $b\rightarrow a$. Finally, we consider the embedding strategy that is used in the order embeddings from \citet{Vendrov2016}: 
\begin{align*}
\textit{Emb}_{\textit{ord}}(a_1,...,a_n) &= \max(\mathbf{a_1},...,\mathbf{a_n})&
\textit{Lab}_{\textit{ord}}(\mathbf{e}) &= \{b\in\mathcal{A} \,|\, \mathbf{b} \preceq \mathbf{e}\}
\end{align*}
where we write $\max$ for the component-wise maximum of the vectors and $\preceq$ is the product order, i.e.\ $(x_1,...,x_n)\preceq (y_1,...,y_n)$ iff $x_i\leq y_i$ for all $i\in \{1,...,n\}$. This embedding model was proposed to improve how hierarchical relations can be encoded. However, as the next result shows, it also allows us to simulate other kinds of propositional dependencies.
\begin{proposition}
For any propositional knowledge base $K$ over $\mathcal{A}$, there exist embeddings of the attributes in $\mathcal{A}$ such that $b\in \textit{Lab}_{\textit{ord}}(\textit{Emb}_{\textit{ord}}(a_1,...,a_n))$ iff $K\models a_1\wedge ... \wedge a_n \rightarrow b$.
\end{proposition}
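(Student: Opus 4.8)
The plan is to adapt the model-indexed construction used in the proof of Proposition~\ref{propDependencyMonotonicRelu}, but with a twist in the encoding. Let $\textit{mod}(K)=\{\omega_1,\dots,\omega_l\}$ be the set of models of $K$, and work in $\mathbb{R}^l$, using one coordinate per model. The key idea is to encode each attribute $a$ not by the set of models it satisfies, but by its \emph{complement}: define $\mathbf{a}=(x_1^a,\dots,x_l^a)$ by $x_i^a=0$ if $\omega_i\models a$ and $x_i^a=1$ if $\omega_i\not\models a$. Equivalently, $\mathbf{a}$ is the indicator vector of $S_a:=\{i\,|\,\omega_i\not\models a\}$. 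The reason for taking complements is that $\textit{Emb}_{\textit{ord}}$ pools via the component-wise maximum, which on $\{0,1\}$-valued vectors computes the indicator of a \emph{union}; complementing first turns this union of non-models into an intersection of satisfying models, which is what we need.

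Carrying this out, the first step is to observe that $\textit{Emb}_{\textit{ord}}(a_1,\dots,a_n)=\max(\mathbf{a_1},\dots,\mathbf{a_n})$ is the indicator vector of $S_{a_1}\cup\dots\cup S_{a_n}=\{i\,|\,\omega_i\not\models a_j\text{ for some }j\}$, i.e.\ of the set of models of $K$ that falsify at least one premise. Since the product order $\preceq$ restricted to $\{0,1\}$-valued vectors is exactly set inclusion, we get $\mathbf{b}\preceq\textit{Emb}_{\textit{ord}}(a_1,\dots,a_n)$ iff $S_b\subseteq S_{a_1}\cup\dots\cup S_{a_n}$, i.e.\ iff every model of $K$ that falsifies $b$ also falsifies some $a_j$. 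Taking the contrapositive within $\textit{mod}(K)$, this is equivalent to: every model of $K$ that satisfies all of $a_1,\dots,a_n$ also satisfies $b$, which is precisely $K\models a_1\wedge\dots\wedge a_n\rightarrow b$. Hence $b\in\textit{Lab}_{\textit{ord}}(\textit{Emb}_{\textit{ord}}(a_1,\dots,a_n))$ iff $K\models a_1\wedge\dots\wedge a_n\rightarrow b$, as required.

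I do not expect a genuine obstacle here; the construction is arguably simpler than those behind Propositions~\ref{propDependencyMonotonicRelu} and~\ref{propMonotonicHad}. The points that need care are: (i) getting the \emph{orientation} of the encoding right, since encoding $a$ directly by its set of models (rather than by $S_a$) would make $\max$ compute a union of models instead of an intersection and break the argument; (ii) the degenerate case where $K$ is unsatisfiable, where $\textit{mod}(K)=\emptyset$ and one works in $\mathbb{R}^0$ (or adds a single dummy coordinate), in which case every attribute vector lies below every pooled vector and every rule is captured --- which is correct, since an inconsistent $K$ entails everything; and (iii) the empty-body case $n=0$, which is handled by the natural convention $\textit{Emb}_{\textit{ord}}(\emptyset)=\mathbf{0}$ (the indicator of the empty union), for which $\mathbf{b}\preceq\mathbf{0}$ iff $S_b=\emptyset$ iff $b$ holds in all models of $K$ iff $K\models\top\rightarrow b$, or else excluded as elsewhere in the paper.
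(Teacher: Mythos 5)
Your construction is exactly the one in the paper: the paper also encodes each attribute $a$ by the vector with $x^a_i=0$ if $\omega_i\models a$ and $x^a_i=1$ otherwise (for $\omega_i$ ranging over $\textit{mod}(K)$), handles the inconsistent case with zero vectors, and concludes via the same contrapositive reading of $\preceq$ as set inclusion. The proposal is correct and takes essentially the same approach, just phrased more explicitly in set-theoretic language.
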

\begin{proof}
Let $\textit{mod}(K) =\{\omega_1,...,\omega_l\}$ be the set of models of $K$. If $K$ is inconsistent, we can simply choose $\mathbf{a}=\mathbf{0}$ for every $a\in\mathcal{A}$, with $\mathbf{0}$ a vector of zeroes of an arbitrary dimension. Now suppose  $|\textit{mod}(K)|>0$.
We define the embedding $\mathbf{a}=(x^a_1,...,x^a_{l})$ of the attribute $a$ as follows:
\begin{align*}
x^a_i &= 
\begin{cases}
0 & \text{if $\omega_i\models a$}\\
1 & \text{otherwise}
\end{cases}
\end{align*}
Let $\mathbf{e} = (y_1,...,y_l) = \textit{Emb}_{\textit{ord}}(a_1,...,a_n)$.  Then we have $y_i=0$ if $\omega_i\models \{a_1,...,a_n\}$ and $y_i=1$ otherwise. We thus have $\mathbf{b} \preceq \mathbf{e}$ iff $\forall i.(x^b_i=1) \Rightarrow (y_i=1)$ iff $\forall i.(\omega_i\not\models b) \Rightarrow (\omega\not\models \{a_1,...,a_n\})$ iff $K\models a_1\wedge ... \wedge a_n \rightarrow b$.
\end{proof}
\noindent Order embeddings essentially represent each attribute as an axis-aligned cone (where the vector components are viewed as lower bounds). Other region based embedding models can be used to model monotonic dependencies in a similar way, including e.g.\ box embeddings \cite{vilnis2018probabilistic} and hyperbolic entailment cones \cite{ganea2018hyperbolic}.
\section{Non-Monotonic Reasoning}\label{secNonMon}
We now consider a standard ranking-based semantics of default rules \cite{lehmann1992does}. Let $\Theta$ be a stratified knowledge base, i.e.\ a ranked list of formulas $(\alpha_1,...,\alpha_k)$. Then we say that $\Theta\models \alpha \vartriangleright \beta$ iff there is some $i\in \{0,...,k\}$ such that $\alpha_1\wedge ... \wedge \alpha_i \wedge \alpha \models \beta$ and  $\alpha_1\wedge ... \wedge \alpha_i \wedge \alpha \not\models \neg\beta$. The formula $\alpha \vartriangleright \beta$ intuitively means that ``if $\alpha$ holds then typically also $\beta$ holds''. This semantics of default rules can equivalently be characterised in terms of the \emph{maximum a posteriori} (MAP) consequences of a probabilistic model, i.e.\ $\alpha\vartriangleright \beta$ is inferred iff $\beta$ is true in the most probable models of $\alpha$ \cite{DBLP:conf/ecai/KuzelkaDS16}. 

\begin{example}
Consider the following stratified knowledge base:
\begin{align*}
\Theta = (\neg\textit{cat:technology} \vee \neg \textit{cat:food}, \textit{apple}\wedge\textit{safari}\rightarrow \textit{cat:technology},\textit{apple}\rightarrow\textit{cat:food}) 
\end{align*}
It can be verified that $\Theta\models \textit{apple}\vartriangleright \textit{cat:food}$ and $\Theta\models \textit{apple}\wedge\textit{safari}\vartriangleright \textit{cat:technology}$, while $\Theta\not\models \textit{apple}\wedge\textit{safari}\vartriangleright \textit{cat:food}$.
\end{example}

\noindent We now analyse whether an embedding can be found such that $b\in \textit{Lab}(\textit{Emb}(a_1,...,a_n))$ iff $\Theta\models a_1\wedge ... \wedge a_n \vartriangleright b$. Clearly, embedding strategies which cannot be used to simulate monotonic reasoning cannot be used to simulate this form of non-monotonic reasoning either. This is because we can choose $\Theta_1= (\alpha_1)$, where $\alpha_1$ is the conjunction of all formulas in a propositional knowledge base $K$. However, we find that the strategy $(\phi_{\textit{avg}},\psi_{\textit{relu}})$ can be used to model non-monotonic attribute dependencies.

\begin{proposition}\label{propDependencyNMRRelu}
For any stratified knowledge base $\Theta$ over $\mathcal{A}$, there exist embeddings of the attributes in $\mathcal{A}$ such that $b\in \textit{Lab}_{\textit{relu}}(\textit{Emb}_{\textit{avg}}(a_1,...,a_n))$ iff $\Theta\models a_1\wedge ... \wedge a_n \vartriangleright b$.
\end{proposition}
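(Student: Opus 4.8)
The plan is to adapt the construction from the proof of Proposition~\ref{propDependencyMonotonicRelu} so that it encodes the ranked (MAP) semantics rather than plain propositional entailment. Recall that $\Theta\models a_1\wedge\ldots\wedge a_n\vartriangleright b$ iff there is a smallest index $i$ such that $\alpha_1\wedge\ldots\wedge\alpha_i\wedge a_1\wedge\ldots\wedge a_n$ is consistent, and for that $i$ we have $\alpha_1\wedge\ldots\wedge\alpha_i\wedge a_1\wedge\ldots\wedge a_n\models b$. Equivalently, among all propositional models of $a_1\wedge\ldots\wedge a_n$, the ``most preferred'' ones — those falsifying as few high-priority formulas $\alpha_j$ as possible, in lexicographic order of priority — must all satisfy $b$. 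So the embedding needs to (i) recognize, for a given attribute set $S=\{a_1,\ldots,a_n\}$, which interpretations are the preferred models of $S$, and (ii) check whether $b$ holds in all of them, all via a single ReLU-gated dot product after averaging.

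The key idea is to refine the coordinate scheme of Proposition~\ref{propDependencyMonotonicRelu}. Instead of one coordinate per model of $K$, I would use one coordinate per interpretation $\omega$ over $\mathcal{A}$ (or at least per interpretation that is a preferred model of \emph{some} attribute set); there are finitely many, call them $\omega_1,\ldots,\omega_N$, and let $\mathbf{a}=(x^a_1,\ldots,x^a_N,1)$ with an extra bias coordinate as before. As in the monotonic proof, set $x^a_i=1$ if $\omega_i\models a$ and $x^a_i=-\delta$ otherwise, with $\delta$ huge; then after averaging over $S$, the coordinate $y_i$ is positive (in fact $=1$) exactly when $\omega_i\models S$ and strongly negative otherwise, so $\textsc{ReLU}$ zeroes out every coordinate except those indexed by models of $S$. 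The twist needed for the non-monotonic case is that we must not only zero out the non-models of $S$, but also \emph{suppress} the models of $S$ that are not preferred. The way to do this while staying inside the averaging framework is to make the ``on'' value of $y_i$ depend on how preferred $\omega_i$ is: I would give each $\omega_i$ a rank $r(\omega_i)\in\{0,1,\ldots,k\}$ equal to the least index $j$ such that $\omega_i\models\alpha_1\wedge\ldots\wedge\alpha_j$ but $\omega_i\not\models\alpha_{j+1}$ (i.e. the first violated stratum; rank $k$ if none is violated), and scale coordinate $i$ by a factor $w_i$ that is astronomically larger for higher rank (more preferred). Then $\textsc{ReLU}(\mathbf{e})\cdot\mathbf{b}$ becomes a weighted count: $1 + \sum_{i:\omega_i\models S\cup\{b\}} w_i - \delta\sum_{i:\omega_i\models S\cup\{\neg b\}} w_i$ — wait, more carefully, after the ReLU step the surviving coordinates all have value proportional to $w_i$, and dotting with $\mathbf{b}=(x^b_1,\ldots,x^b_N,1)$ contributes $+w_i$ for models of $S\cup\{b\}$ and $-\delta w_i$ for models of $S\cup\{\neg b\}$. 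By choosing the $w_i$ so that a single preferred model (highest surviving rank) dominates the total contribution of all strictly-less-preferred models combined (a standard ``super-increasing weights'' trick, $w$ for rank $r$ chosen larger than $\delta$ times the sum of all $w$'s of lower ranks times $N$), the sign of $\textsc{ReLU}(\mathbf{e})\cdot\mathbf{b}$ is governed entirely by the maximal-rank models of $S$: it is $\geq 0$ iff all of those satisfy $b$, which is precisely $\Theta\models \bigwedge_i a_i\vartriangleright b$.

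The steps, in order: first, recall the MAP/lexicographic characterization of $\vartriangleright$ and pin down ``preferred model of $S$'' as a maximal-rank model; second, define the coordinates indexed by all interpretations, with the $-\delta$ penalty to implement the $S$-membership filter via ReLU exactly as in Proposition~\ref{propDependencyMonotonicRelu}; third, introduce the rank-dependent weights $w_i$ and verify the super-increasing inequality that makes the top rank dominate; fourth, compute $\textsc{ReLU}(\textit{Emb}_{\textit{avg}}(S))\cdot\mathbf{b}$ and read off that its sign encodes ``$b$ holds in every maximal-rank model of $S$''; fifth, match this to the definition of $\Theta\models \bigwedge_i a_i\vartriangleright b$, handling the degenerate cases (e.g. $S$ inconsistent, or $\Theta$ with an inconsistent top stratum) separately. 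The main obstacle I anticipate is the bookkeeping in step four: unlike the monotonic case, the surviving coordinates no longer all carry the same value, so I need the weight separation to be robust enough that the $-\delta w_i$ penalty from a \emph{single} top-rank counter-model of $b$ outweighs \emph{all} positive contributions from lower ranks, while simultaneously the positive contributions from top-rank models of $b$ (when there is no top-rank counter-model) outweigh the $-\delta w_i$ penalties coming only from strictly lower ranks. Getting a single chain of inequalities on $\delta$ and the $w_i$'s that secures both directions simultaneously — essentially ordering the magnitudes as $1 \ll w_{\text{rank }0} \ll \delta w_{\text{rank }0} \ll w_{\text{rank }1}\ll \cdots$ with enough slack for the factor $N$ — is the delicate part, though it is ultimately a routine (if fiddly) choice of constants once the structure is laid out.
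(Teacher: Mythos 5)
Your construction is essentially the paper's own proof: the paper likewise indexes coordinates by all interpretations over $\mathcal{A}$, scales the on/off pattern $(1,-\delta)$ of coordinate $i$ by a rank-dependent weight (concretely $\delta^{2\mu(\omega_i)}$, where $\mu(\omega)$ is the length of the longest satisfied prefix of $\Theta$, i.e.\ your $r$), and reduces the sign of $\textsc{ReLU}(\mathbf{e})\cdot\mathbf{b}$ to comparing the maximal ranks $m^+$ and $m^-$ attained by models of $\{a_1,\dots,a_n,b\}$ and $\{a_1,\dots,a_n,\neg b\}$, exactly your ``top rank dominates'' argument. Two cosmetic remarks only: your opening gloss should refer to the \emph{largest} (not smallest) index $i$ for which $\alpha_1\wedge\dots\wedge\alpha_i\wedge a_1\wedge\dots\wedge a_n$ is consistent (your subsequent maximal-rank-models reading is the correct one and is what your construction actually uses), and the ``fiddly'' chain of inequalities you anticipate does close with the single choice $\delta>2^{|\mathcal{A}|}$, since the dotted contributions become $\delta^{4\mu}$ versus $-\delta^{1+4\mu}$ and one top-rank term beats the at most $2^{|\mathcal{A}|}$ lower-rank terms.
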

\begin{proof}
Let $\Theta=(\alpha_1,...,\alpha_m)$ and let $\omega_1,...,\omega_l$ be an enumeration of all interpretations over $\mathcal{A}$. To define the embeddings $\mathbf{a}=(x^a_1,...,x^a_{l})$, we use the mapping $\mu$ defined by $\mu(\omega)=\max\{i \,|\, \omega\models \alpha_1\wedge ...\wedge \alpha_i\}$, where we assume $\mu(\omega)=0$ if $\omega\not\models \alpha_1$. We define:
\begin{align*}
x^a_i &= \begin{cases}
\delta^{2\mu(\omega_i)} & \text{if $\omega_i\models a$}\\
-\delta^{(1+2\mu(\omega_i))} & \text{otherwise}
\end{cases}
\end{align*}
where $\delta$ is chosen such that $\delta > 2^{|\mathcal{A}|}$.
Let $\mathbf{e} = (y_1,...,y_{l}) = \textit{Emb}_{\textit{avg}}(a_1,...,a_n)$. Similar as in the proof of Proposition \ref{propDependencyMonotonicRelu} we then find $y_i<0$ iff $\omega_i\not\models\{a_1,...,a_n\}$, i.e.:
\begin{align*}
\textsc{ReLU}(y_i) &=
\begin{cases}
\delta^{2\mu(\omega_i)} & \text{if $\omega_i \models \{a_1,...,a_n\}$}\\
0 & \text{otherwise}
\end{cases}
\end{align*}
For $b\in \mathcal{A}$, we find that $\textsc{ReLU}(\textit{Emb}_{\textit{avg}}(a_1,...,a_n)) \cdot \mathbf{b}$ is given by
\begin{align*}
\sum_{\omega_i\models \{a_1,...,a_n,b\}} \delta^{4\mu(\omega_i)}
- \sum_{\omega_i\models \{a_1,...,a_n,\neg b\}} \delta^{(1+4\mu(\omega_i))}
\end{align*}
We thus have $\textsc{ReLU}(\textit{Emb}_{\textit{avg}}(a_1,...,a_n))\cdot \mathbf{b}\geq 0$ iff
\begin{align}\label{eqProofReLUNMRA}
\sum_{\omega_i\models \{a_1,...,a_n,b\}} \delta^{4\mu(\omega_i)} \geq \sum_{\omega_i\models \{a_1,...,a_n,\neg b\}} \delta^{(1+4\mu(\omega_i))}
\end{align}
Let $m^+= \max\{\mu(\omega_i) | \omega_i\models \{a_1,...,a_n,b\}\}$ and $m^-= \max\{\mu(\omega_i) | \omega_i\models \{a_1,...,a_b,\neg b\}\}$, where we define $m^-=-1$ if $\{a_1,...,a_n,\neg b\}$ is inconsistent (i.e.\ if $b=\neg a_i$ for some $i$). Then we have that $\Theta\models a_1\wedge ...\wedge a_n \vartriangleright b$ is equivalent to $m^+>m^-$. If $m^+ > m^-$, we find
\begin{align*}
\sum_{\omega_i\models \{a_1,...,a_n,b\}} \delta^{4\mu(\omega_i)} &\geq
\delta^{4m^+}
= \delta^3 \cdot \delta^{1+4(m^+-1)}
> \delta \cdot \delta^{1+4(m^+-1)}
> 2^{|\textit{At}|} \cdot \delta^{1+4(m^+-1)}\\
&\geq 2^{|\textit{At}|} \cdot \delta^{1+4(m^-)}
\geq \sum_{\omega_i\models \{a_1,...,a_n,\neg b\}} \delta^{(1+4\mu(\omega_i))}
\end{align*}
Conversely, if $m^+\leq m^-$ we find
\begin{align*}
\sum_{\omega_i\models \{a_1,...,a_n,b\}} \delta^{4\mu(\omega_i)} 
&\leq \sum_{\omega_i\models \{a_1,...,a_n,b\}} \delta^{4m^+}
\leq 2^{|\textit{At}|}\cdot \delta^{4m^+}
< \delta\cdot \delta^{4m^+}
= \delta^{1 + 4m^+}
\leq \delta^{1 + 4m^-}\\
&\leq \sum_{\omega_i\models \{a_1,...,a_n,\neg b\}} \delta^{(1+4\mu(\omega_i))}
\end{align*}
where the last step relies on the fact that $m^+ \leq m^-$ implies $m^-\geq 0$ and thus there must be some $\omega_i$ such that $\omega_i\models \{a_1,...,a_n, b\}$.
We thus have that $\Theta\models a_1\wedge ...\wedge a_n \vartriangleright b$ is equivalent to $m^+>m^-$, which is equivalent to \eqref{eqProofReLUNMRA} and $\textsc{ReLU}(\textit{Emb}_{\textit{avg}}(a_1,...,a_n))\cdot \mathbf{b}\geq 0$.
\end{proof}
\noindent In the appendix we show the following result, using a similar construction.
\begin{proposition}\label{propNMHadamard}
For any stratified knowledge base $\Theta$ over $\mathcal{A}$, there exist embeddings of the attributes in $\mathcal{A}$ such that $b\in \textit{Lab}_{\textit{dot}}(\textit{Emb}_{\textit{had}}(a_1,...,a_n))$ iff $\Theta\models a_1\wedge ... \wedge a_n \vartriangleright b$.
\end{proposition}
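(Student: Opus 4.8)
The plan is to mimic the construction in the proof of Proposition~\ref{propDependencyNMRRelu}, exploiting the fact that the Hadamard product can play the coordinate-masking role that \textsc{ReLU} plays there. Fix an enumeration $\omega_1,\dots,\omega_l$ of all interpretations over $\mathcal{A}$ and, with $\Theta=(\alpha_1,\dots,\alpha_m)$, reuse the rank function $\mu(\omega)=\max\{i \mid \omega\models\alpha_1\wedge\cdots\wedge\alpha_i\}$ (with $\mu(\omega)=0$ if $\omega\not\models\alpha_1$). First I would take the ``input'' embeddings to be purely Boolean: $\mathbf{a}=(x^a_1,\dots,x^a_l)$ with $x^a_i=1$ if $\omega_i\models a$ and $x^a_i=0$ otherwise. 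Then $\mathbf{e}=\textit{Emb}_{\textit{had}}(a_1,\dots,a_n)=\mathbf{a_1}\odot\cdots\odot\mathbf{a_n}$ has $i$-th component equal to $1$ if $\omega_i\models\{a_1,\dots,a_n\}$ and $0$ otherwise; that is, the product of zeros realises the same masking as $\textsc{ReLU}(y_i)$ in Proposition~\ref{propDependencyNMRRelu}.

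Next I would push all the rank-dependent weighting into the second attribute embeddings $\mathbf{\tilde{b}}$, which is where the freedom $\mathbf{a}\neq\mathbf{\tilde{a}}$ becomes essential (just as for Proposition~\ref{propMonotonicHad}). Concretely, set $\mathbf{\tilde{b}}=(\tilde{x}^b_1,\dots,\tilde{x}^b_l)$ with $\tilde{x}^b_i=\delta^{4\mu(\omega_i)}$ if $\omega_i\models b$ and $\tilde{x}^b_i=-\delta^{1+4\mu(\omega_i)}$ otherwise, where $\delta>2^{|\mathcal{A}|}$, and take threshold $\lambda_b=0$. Since $\mathbf{e}$ is a $0/1$ vector, a direct computation gives
\[
\mathbf{e}\cdot\mathbf{\tilde{b}}=\sum_{\omega_i\models\{a_1,\dots,a_n,b\}}\delta^{4\mu(\omega_i)}-\sum_{\omega_i\models\{a_1,\dots,a_n,\neg b\}}\delta^{1+4\mu(\omega_i)},
\]
which is \emph{verbatim} the quantity analysed in the proof of Proposition~\ref{propDependencyNMRRelu}.

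From here the argument is identical to that of Proposition~\ref{propDependencyNMRRelu}: writing $m^+$ and $m^-$ for the largest value of $\mu$ attained on an interpretation satisfying $\{a_1,\dots,a_n,b\}$, respectively $\{a_1,\dots,a_n,\neg b\}$ (with $m^-=-1$ when the latter set is inconsistent, i.e.\ when $b$ is one of the premises), the gap between $\delta^{4m^+}$ and $2^{|\mathcal{A}|}\cdot\delta^{1+4m^-}$ forces $\mathbf{e}\cdot\mathbf{\tilde{b}}\geq 0$ exactly when $m^+>m^-$; and $m^+>m^-$ is in turn equivalent to $\Theta\models a_1\wedge\cdots\wedge a_n\vartriangleright b$. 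Because $\{a_1,\dots,a_n\}$ is a set of atoms it is always satisfiable, so $m^+\geq 0$ whenever $b$ is not negated by a premise, and the remaining degenerate cases are handled exactly as in the earlier proof.

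I do not expect a genuine obstacle: the only real idea is the observation that multiplying by $0/1$ vectors reproduces the conjunction-indicator behaviour of \textsc{ReLU}, after which the magnitude bookkeeping transfers unchanged. The one point needing a little care is the consistency edge cases (when the head $b$ coincides with a premise, or when $\{a_1,\dots,a_n,\neg b\}$ is unsatisfiable), but these are resolved as in Proposition~\ref{propDependencyNMRRelu}, and the choice $\lambda_b=0$ needs no auxiliary ``bias'' coordinate.
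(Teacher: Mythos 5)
Your construction is essentially identical to the paper's own proof in the appendix: Boolean $0/1$ input embeddings so that the Hadamard product acts as the conjunction indicator, rank-weighted output embeddings $\mathbf{\tilde{b}}$ with alternating signs and powers of $\delta$, and the same $m^+$ versus $m^-$ comparison (the paper uses exponents $2\mu(\omega_i)$ and $1+2\mu(\omega_i)$ where you use $4\mu(\omega_i)$ and $1+4\mu(\omega_i)$, an immaterial difference). The argument is correct as stated.
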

\noindent Finally, we show that the combination $\textit{Emb}_{\textit{ord}}$ and $\textit{Lab}_{\textit{ord}}$ cannot be used to model non-monotonic dependencies. A similar limitation arises for all approaches which learn embeddings by taking the intersection of region-based attribute representations.
\begin{counterexample}
Let $\Theta=(a\wedge b\rightarrow \bot, a\rightarrow x)$. To simulate $\Theta$ with an embedding of the form $(\textit{Emb}_{\textit{ord}},\textit{Lab}_{\textit{ord}})$, we need $\mathbf{x}\preceq \mathbf{a}$ and $\mathbf{x} \not\preceq \max(\mathbf{a}, \mathbf{b})$. However, this is impossible since  $\mathbf{a}\preceq\max(\mathbf{a},\mathbf{b})$. 
\end{counterexample}

\section{Concluding Remarks}
It remains poorly understood how we can design embedding models to encourage different kinds of dependencies to be captured (with the problem of embedding hierarchies being a notable exception \cite{Vendrov2016,nickel2017poincare,ganea2018hyperbolic}). The analysis presented in this paper provides a step towards such an understanding. The ability of the ReLU-based labelling function to model monotonic and non-monotonic attribute dependencies seems of particular interest, given how close it stays to standard embedding models. While we have focused on propositional dependencies, our results also have implications for knowledge graph embedding. For instance, bilinear models can be viewed as instances of the sigmoid based embedding strategy, where attributes represent links to other entities. The attribute vectors then depend on the embeddings of other entities, which are iteratively updated. This makes it possible to capture certain types of relational dependencies. Developing a better understanding of which types of relational dependencies can be modelled in this way is an important avenue for future work. However, the results presented in this paper already show that such models are not able to capture arbitrary dependencies.








\bibliography{refs}
\bibliographystyle{plainnat}

\appendix
\section{Counterexample for $\textit{Emb}_{\textit{sig}}$ with $\textit{Lab}_{\textit{dist}}$}
Let $K$ be defined as in Counterexample \ref{counterexampleSigmoid1}, and let us again use the abbreviations of the form $\mathbf{y_{ab}}$ and $\mathbf{y_a}$. We show that there must always exist a hyperplane $H$ that separates $\mathbf{y_{ab}}$ and $\mathbf{y_{cd}}$, on the one hand, from $\mathbf{y_{ac}}$, $\mathbf{y_{ad}}$, $\mathbf{y_{bc}}$ and $\mathbf{y_{bd}}$, on the other hand. The fact that $K$ cannot be modelled using $\textit{Emb}_{\textit{sig}}(a)$ and $\textit{Lab}_{\textit{dist}}$ then follows in the same way as in Counterexample \ref{counterexampleSigmoid1}.

Let us write $S_x$ for the hypersphere around $\mathbf{\tilde{x}}$ of radius $\theta_x$, i.e.\ $S_x=\{\mathbf{e} \,|\, d(\mathbf{e},\mathbf{\tilde{x}})\leq \theta_x\}$, and let $S_y$ similarly be the hypersphere around $\mathbf{\tilde{y}}$ of radius $\theta_y$.
If $d(\tilde{x},\tilde{u})\geq \theta_x + \theta_y$, then we can simply define $H$ as any hyperplane that separates $S_x$ and $S_y$. If  $d(\tilde{x},\tilde{u})< \theta_x + \theta_y$, then we choose $H$ as the unique hyperplane that contains the intersection of the boundaries of $S_x$ and $S_y$, noting that this boundary cannot be empty, since  $\mathbf{y}_{ab},\mathbf{y}_{cd}\in S_x\setminus S_y$ and $\mathbf{y}_{ac},\mathbf{y}_{ad},\mathbf{y}_{bc},\mathbf{y}_{bd} \in S_y\setminus S_x$, meaning that we cannot have $S_x\subseteq S_y$ or $S_y\subseteq S_x$. Moreover, since $\mathbf{y}_{ab},\mathbf{y}_{cd}\in S_x\setminus S_y$ and $\mathbf{y}_{ac},\mathbf{y}_{ad},\mathbf{y}_{bc},\mathbf{y}_{bd} \in S_y\setminus S_x$, in both cases we clearly have that $H$ separates $\mathbf{y}_{ab},\mathbf{y}_{cd}$ from $\mathbf{y}_{ac},\mathbf{y}_{ad},\mathbf{y}_{bc},\mathbf{y}_{bd}$.

\section{Proof of Proposition \ref{propMonotonicHad}}
Let $\textit{mod}(K) =\{\omega_1,...,\omega_l\}$ be the set of models of $K$. 
We define the embeddings $\mathbf{a}=(x^a_1,...,x^a_{l+1})$ and $\mathbf{\tilde{a}}=(\tilde{x}^a_1,...,\tilde{x}^a_{l+1})$ of the atom $a$ as follows:
\begin{align*}
x^a_i  &= 
\begin{cases}
1 & \text{if $i\leq l$ and $\omega_i\models a$}\\
0 & \text{if $i\leq l$ and $\omega_i\not\models a$}\\
1 & \text{if $i=l+1$}
\end{cases}&
\tilde{x}^a_i &= 
\begin{cases}
1 & \text{if $i\leq l$ and $\omega_i\models a$}\\
-\delta & \text{if $i\leq l$ and $\omega_i\not\models a$}\\
1 & \text{if $i=l+1$}
\end{cases}
\end{align*}
where $\delta$ is a constant satisfying $\delta > 2^{|\mathcal{A}|}$.
Let us write $\textit{Emb}_{\textit{had}}(a_1,...,a_n)=\mathbf{y}=(y_1,...,y_{l+1})$. Then we clearly have:
\begin{align*}
y_i &=
\begin{cases}
1 & \text{if $\omega_i \models \{a_1,...,a_n\}$}\\
0 & \text{otherwise}
\end{cases}
\end{align*}
For $b\in \mathcal{A}$, we find
\begin{align*}
\mathbf{y}\cdot \mathbf{\tilde{b}}
= 1 + |\{\omega \,|\, \omega\models K \cup\{a_1,...,a_n, b\}| - \delta|\{\omega \,|\, \omega\models K \cup\{a_1,...,a_n, \neg b\}|
\end{align*}
Note that we have $\mathbf{y}\cdot\mathbf{\tilde{b}}\geq 0$ iff $|\{\omega \,|\, \omega\models K \cup\{a_1,...,a_n, \neg b\}|=0$, since we assumed $\delta > 2^{|\mathcal{A}|}$. We have $|\{\omega \,|\, \omega\models K \cup\{a_1,...,a_n, \neg b\}| = 0$ iff $K\cup \{a_1,...,a_n\} \models b$ iff $K\models a_1\wedge...\wedge a_n\rightarrow b$. In particular, we have:
$$
(\textit{Emb}_{\textit{had}}(a_1,...,a_n) \cdot \mathbf{\tilde{b}} \geq 0) \quad\Leftrightarrow\quad (K\models a_1\wedge...\wedge a_n\rightarrow b)
$$

\section{Proof of Proposition \ref{propNMHadamard}}
Let $\Omega=(\alpha_1,..,\alpha_m)$ and
let $\omega_1,...,\omega_l$ be an enumeration of all interpretations over $\mathcal{A}$.
We define the embedding $\mathbf{a}=(x^a_1,...,x^a_{l})$ as follows:
\begin{align*}
x^a_i &= 
\begin{cases}
1 & \text{if $\omega_i\models a$}\\
0 & \text{otherwise}
\end{cases}
\end{align*}
To define the embedding $\mathbf{\tilde{a}}=(\tilde{x}^a_1,...,\tilde{x}^a_{l})$, we use the mapping $\mu$ defined by $\mu(\omega)=\max\{i \,|\, \omega\models \alpha_1\wedge ...\wedge \alpha_i\}$, where we assume $\mu(\omega)=0$ if $\omega\not\models\alpha_1$:
\begin{align*}
\tilde{x}^a_i &= \begin{cases}
\delta^{2\mu(\omega_i)} & \text{if $\omega_i\models a$}\\
-\delta^{(1+2\mu(\omega_i))} & \text{otherwise}
\end{cases}
\end{align*}
where $\delta$ is a constant which is chosen such that $\delta > 2^{|\mathcal{A}|}$.
Let us write $\textit{Emb}_{\textit{had}}(a_1,...,a_n)=\mathbf{y} = (y_1,...,y_{l})$. Note that we have:
\begin{align*}
y_i &=
\begin{cases}
1 & \text{if $\omega_i \models \{a_1,...,a_n\}$}\\
0 & \text{otherwise}
\end{cases}
\end{align*}
For $b\in \mathcal{A}$, we find
\begin{align*}
\mathbf{y}\cdot \mathbf{\tilde{b}}
&= \sum_{\omega_i\models \{a_1,...,a_n,b\}} \delta^{2\mu(\omega_i)}
- \sum_{\omega_i\models \{a_1,...,a_n,\neg b\}} \delta^{(1+2\mu(\omega_i))}
\end{align*}
We thus have $\mathbf{y}\cdot\mathbf{\tilde{b}}\geq 0$, i.e.\ $b\in \textit{Lab}_{\textit{dot}}(\mathbf{y})$, iff
\begin{align}\label{eqProofReLUNMRA}
\sum_{\omega_i\models \{a_1,...,a_n,b\}} \delta^{2\mu(\omega_i)} \geq \sum_{\omega_i\models \{a_1,...,a_n\neg b\}} \delta^{(1+2\mu(\omega_i))}
\end{align}
Let $m^+= \max\{\mu(\omega_i) | \omega_i\models \{a_1,...,a_n,b\}\}$ and $m^-= \max\{\mu(\omega_i) | \omega_i\models \{a_1,...,a_n,\neg b\}\}$, where we define $m^-=-1$ if $\{a_1,...,a_n,\neg b\}$ is inconsistent (i.e.\ if $b=\neg a_i$ for some $i$). Then we have $\Theta\models a_1\wedge...\wedge a_n \vartriangleright b$ iff $m^+ > m^-$.
If $m^+ > m^-$, we find 
\begin{align*}
\sum_{\omega_i\models \{a_1,...,a_n,b\}} \delta^{2\mu(\omega_i)} &\geq
\delta^{2m^+}
= \delta \cdot \delta^{1+2(m^+-1)}
> 2^{|\mathcal{A}|} \cdot \delta^{1+2(m^+-1)}
\geq 2^{|\mathcal{A}|} \cdot \delta^{1+2(m^-)}\\
&\geq \sum_{\omega_i\models \{a_1,...,a_n,\neg b\}} \delta^{(1+2\mu(\omega_i))}
\end{align*}
Now, conversely, suppose $m^+\leq m^-$. Then we have 
\begin{align*}
\sum_{\omega_i\models \{a_1,...,a_n,b\}} \delta^{2\mu(\omega_i)} 
&\leq \sum_{\omega_i\models \{a_1,...,a_n,b\}} \delta^{2m^+}
\leq 2^{|\mathcal{A}|}\cdot \delta^{2m^+}
< \delta\cdot \delta^{2m^+}
= \delta^{1 + 2m^+}
\leq \delta^{1 + 2m^-}\\
&\leq \sum_{\omega_i\models \{a_1,...,a_n,\neg b\}} \delta^{(1+2\mu(\omega_i))}
\end{align*}
where the last step relies on the fact that $m^+ \leq m^-$ implies $m^-\geq 0$, and hence $\{a_1,...,a_n, \neg b\}$ must be consistent.
We thus have that $m^+>m^-$ is equivalent to $\sum_{\omega_i\models \{a_1,...,a_n,b\}} \delta^{2\mu(\omega_i)} \geq \sum_{\omega_i\models \{a_1,...,a_n\neg b\}} \delta^{(1+2\mu(\omega_i))}$, which is equivalent to $b\in \textit{Lab}_{\textit{dot}}(\textit{Emb}_{\textit{had}}(a_1,...,a_n))$.
\end{document}